%% file: main.tex
\RequirePackage[svgnames]{xcolor}
\PassOptionsToPackage{inline}{enumitem}

\documentclass[11pt,letterpaper]{mystyle}

\usepackage[all]{hypcap}
\usepackage[svgnames]{xcolor}
\usepackage[comma,authoryear,compress]{natbib}
\usepackage{hyperref}[citecolor=lightblue]

\hypersetup{
    colorlinks = true,
    citecolor = {YaleBlue},
}

\usepackage[utf8]{inputenc} 
\usepackage[T1]{fontenc}    
\usepackage{booktabs}       
\usepackage{amsfonts}       
\usepackage{nicefrac}       
\usepackage{microtype}      
\usepackage{xcolor}         
\definecolor{VeriGateSOne}{HTML}{C99D15}
\definecolor{VeriGateSTwo}{HTML}{3A71D0}
\definecolor{VeriGateSThree}{HTML}{4A8330}
\DeclareRobustCommand{\verigatebadge}[2]{%
  \kern0.03em\tikz[baseline=(badge.base)]\node[%
    circle,%
    draw=none,%
    fill=#1,%
    text=black,%
    font=\bfseries\tiny,%
    minimum size=1.42em,%
    inner sep=0pt%
  ] (badge) {#2};\kern0.02em%
}
\usepackage{tcolorbox}

\usepackage{xspace}
\newcommand{\ourstitle}{VeriGate\xspace}
\newcommand{\ours}{\textup{\textsc{\ourstitle}}\xspace}
\newcommand{\oursfull}{\textup{Verifier-Gated Step-Level GRPO}\xspace}

\usepackage{amsmath}
\usepackage{amssymb}
\usepackage{mathtools}
\usepackage{amsthm}
\usepackage{makecell}
\usepackage{enumitem}
\usepackage{tikz}
\usepackage{booktabs}
\usepackage{wrapfig}
\usetikzlibrary{arrows.meta, positioning, shapes.geometric, fit}
\usepackage{circledsteps}
\usepackage{subcaption}
\usepackage{algorithm}
\usepackage{algorithmicx}
\usepackage{algpseudocode}
\usepackage{microtype}
\usepackage{graphicx}
\expandafter\def\csname ver@subfig.sty\endcsname{}
\usepackage{booktabs} %
\usepackage{float}
\usepackage{bigstrut}

\usepackage{mathrsfs}
\usepackage{nicefrac}
\usepackage{dsfont}
\usepackage{enumitem}
\usepackage{subcaption}
\usepackage{graphicx,subfig}
\usepackage{cleveref}
\usepackage{bxcoloremoji}

\usepackage{float}

\usepackage[utf8]{inputenc} %
\usepackage[T1]{fontenc}    %
\usepackage{hyperref}       %
\usepackage{url}            %
\usepackage{booktabs}       %
\usepackage{amsfonts}       %
\usepackage{nicefrac}       %
\usepackage{microtype}      %
\usepackage{graphicx}
\usepackage{subcaption} 
\usepackage{amssymb}
\usepackage{fdsymbol}
\usepackage{wrapfig}
\usepackage{lipsum}
\usepackage{enumitem}
\usepackage{stackengine}
\usepackage[font=small,labelfont=bf]{caption}
\usepackage{color}
\usepackage{adjustbox}

\usepackage{rotating}
\usepackage{makecell}

\input{macro}
\newtcolorbox{AIbox}[2][]{aibox,title=#2,#1}
\definecolor{lightblue}{rgb}{0.22,0.45,0.70}%
\definecolor{Gray}{gray}{0.95}
\definecolor{Cornsilk}{rgb}{1.0, 0.97, 0.86}

\usepackage{amsmath}

\usepackage[all]{hypcap}

\title{\ourstitle: Verifier-Gated Step-Level Supervision for GRPO}

\runningtitle{\ourstitle: Verifier-Gated Step-Level Supervision for GRPO}

\author{
  Aakriti Agrawal$^{*1}$,
  Minghui Liu$^{*1}$, and
  Furong Huang
}

\affil[1]{University of Maryland, College Park}
\affil[*]{equal contribution}

\correspondingauthor{Aakriti Agrawal; Email \href{mailto:agrawal5@umd.edu}{agrawal5@umd.edu}}

\begin{document}

\begin{abstract}
Group Relative Policy Optimization (GRPO) is an effective recipe for training reasoning models with verifier-based outcome rewards, but its supervision is sparse: when all sampled trajectories for a prompt receive the same verifier reward, the group-relative advantage collapses to zero and learning stalls. Outcome-only rewards also provide no step-level credit assignment, limiting exploration and making it harder to learn robust reasoning. We present \ours (\oursfull), a verifier-gated extension of GRPO that addresses these limitations with three design choices. First, \ours keeps the verifier in charge whenever verifier rewards induce a meaningful preference among sampled trajectories, and uses process supervision only when verifier rewards are degenerate. Second, instead of collapsing Process Reward Model (PRM) step scores into a single trajectory reward, \ours converts them into future-cumulated rewards to assign continuation-aware credit. Third, \ours transforms these rewards into group-normalized token-level advantages, restoring informative gradients and fine-grained credit assignment while remaining less susceptible to reward hacking than methods that optimize aggregated PRM scores.
Empirically, training on MATH with 1.5B and 7B Qwen2.5-Instruct models and evaluating on six reasoning benchmarks, \ours improves average accuracy by about $20\%$ and $12\%$ for 1.5B and 7B models respectively, substantially reduces zero-gradient failures, decreases reward-hacking behavior, and improves reasoning quality relative to outcome-only GRPO and PRM-as-outcome baselines.

\coloremojicode{1F3E0} \textbf{Projects}: \href{https://tinyurl.com/Aakriti-agrawal-verigate}{https://tinyurl.com/Aakriti-agrawal-verigate}

\github{} \textbf{Code Repository}: \href{https://github.com/umd-huang-lab/VeriGate}{https://github.com/umd-huang-lab/VeriGate}

\coloremojicode{1F4E7} \textbf{Contact}: \href{mailto:agrawal5@umd.edu}{agrawal5@umd.edu}
\end{abstract}

\maketitle
\vspace{3mm}
\input{sections/Introduction}
\input{sections/Problem}

\input{sections/Method}

\input{sections/Experiments}

\input{sections/Related_Works}
\input{sections/Conclusion}

\section{Acknowledgement}
Agrawal, Liu and Huang are supported by DARPA HR001124S0029-AIQ-FP-019,  National Science Foundation TRAILS Institute (2229885). Private support was provided by Open Philanthropy and Apple. The Authors acknowledge the National Artificial Intelligence Research Resource (NAIRR) Pilot and [insert the resources supporting your project here] for contributing to this research result.


\newpage
\bibliography{references}

\appendix
\input{sections/Appendix}
\appendix
\end{document}

%% file: macro.tex
\definecolor{blanchedalmond}{rgb}{1.0, 0.92, 0.8}
\definecolor{carmine}{rgb}{0.59, 0.0, 0.09}
\definecolor{lightblue}{rgb}{0.22,0.45,0.70}%

\newtheorem{theorem}{Theorem}[section]

\newtheorem{proposition}[theorem]{Proposition}

\renewcommand{\mathbf}{\boldsymbol}

\makeatletter
\def\Ddots{\mathinner{\mkern1mu\raise\p@
\vbox{\kern7\p@\hbox{.}}\mkern2mu
\raise4\p@\hbox{.}\mkern2mu\raise7\p@\hbox{.}\mkern1mu}}
\makeatother

\definecolor{amaranth}{rgb}{0.9, 0.17, 0.31}
\definecolor{antiquebrass}{rgb}{0.8, 0.58, 0.46}
\definecolor{antiquefuchsia}{rgb}{0.57, 0.36, 0.51}
\definecolor{chromeyellow}{rgb}{0.31, 0.47, 0.26}

\newcommand{\github}{\raisebox{-1.5pt}{\includegraphics[height=1.05em]{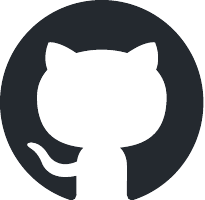}}}

%% file: sections/Introduction.tex
\vspace{-4mm}

\begin{figure}[!htbp]
    \centering
    \includegraphics[width=0.86\linewidth, trim={0 0 0 0}, clip]{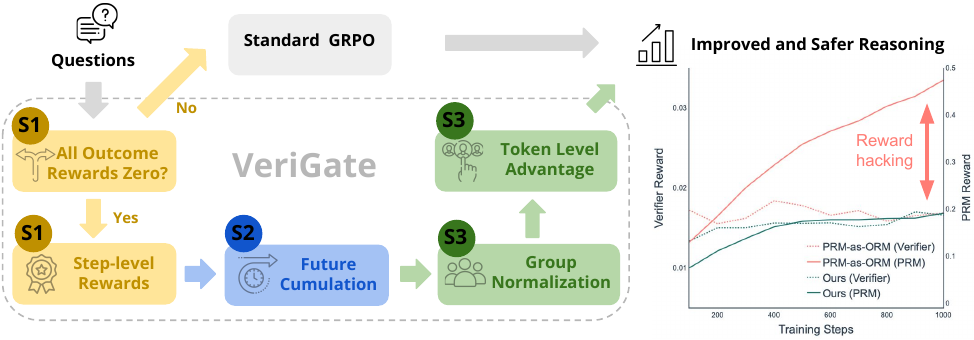}
    \vspace{-1em}
    \caption{\textbf{Overview of \ours.} \textbf{Left:} \ours integrates process supervision into GRPO through three design choices: \verigatebadge{VeriGateSOne}{S1} gate PRM supervision by verifier informativeness, using standard GRPO when verifier rewards induce a preference and activating the PRM only for all-zero verifier groups; \verigatebadge{VeriGateSTwo}{S2} use future-cumulated rewards for continuation-aware credit assignment; and \verigatebadge{VeriGateSThree}{S3} convert PRM feedback into group-normalized token-level advantages, restoring informative learning signals without overriding reliable verifier feedback. \textbf{Right:} Direct optimization of aggregated PRM rewards leads to reward hacking, with PRM scores rising while verifier scores remain flat. \ours mitigates this behavior while preserving step-level supervision.}
    \vspace{-1em}
    \label{fig:intro-fig}
\end{figure}

\vspace{-4mm}
\section{Introduction}
\vspace{-2mm}
Reinforcement learning with verifiable rewards (RLVR) has become a core driver of progress in large reasoning models (LRMs) \citep{jaech2024openai, guo2025deepseek, team2025kimi}. In domains such as mathematics, GRPO \citep{shao2024deepseekmath} is especially appealing because it trains on verifier-based outcome rewards without a separate reward model or critic \citep{lambert2024tulu, guo2025deepseek}. This makes it efficient, relatively stable, and grounded in verifiable correctness. But the same outcome-only design is also GRPO's main weakness on hard reasoning problems. Because GRPO assigns a single scalar reward to the full reasoning trajectory, it suffers from zero-gradient collapse when sampled trajectories receive identical verifier rewards \citep{sun2025rl}, provides no step-level credit assignment so all tokens in a correct solution are reinforced equally \citep{zhang2025interplay}, and offers weak guidance for exploration and generalization \citep{yue2025does, bragg2018can, chen2025exploration, ruan2025unveiling, wu2026reasoning, yan2026spurious}. In short, verifier rewards are trustworthy but too sparse to teach the reasoning process, which motivates adding denser process supervision.

Process Reward Models (PRMs) are a natural way to provide this denser supervision: they score intermediate reasoning steps and can provide feedback that outcome-only training lacks \citep{lightman2023let, wang2024math, luo2024improve, zhang2024rest, zhang2025interplay, yang2025treerpo}. In principle, this should help learning on hard prompts, improve credit assignment, and encourage more faithful reasoning trajectories. \textbf{But integrating PRMs into GRPO is not straightforward.} It requires addressing three coupled \textbf{challenges}. \textbf{(C1) Preserve verifier authority.} PRMs are learned models trained on imperfect supervision, so they inherit biases and blind spots, and the rewards they generate can be noisy; using them naively can let an imperfect signal override a trustworthy verifier and invite reward hacking. \textbf{(C2) Avoid brittle reward aggregation.} PRMs score individual steps, whereas GRPO is naturally driven by trajectory-level rewards; collapsing step scores into a single scalar induces compounding errors, because one biased step can distort the effective reward assigned to the whole response while discarding fine-grained credit assignment \citep{zhang2025interplay}. \textbf{(C3) Convert dense feedback into a GRPO-compatible objective.} Even if we keep PRM feedback dense, it still must be converted into a GRPO-compatible learning signal. Existing approaches typically fail on one or more of these fronts: PRM-as-outcome \citep{zhang2025interplay} methods replace the verifier with an aggregated PRM score, while intrinsic process supervision based methods use token-level, entropy which is unreliable and heavily constrained by the policy model's own capabilities \citep{anonymous2025cut, baker2025monitoring, denison2024sycophancy}.


This raises a central question:
\textit{Can we use dense step-level process feedback to fix GRPO's zero-gradient and credit-assignment failures, without giving up the reliability of verifier-based rewards?}

\textbf{Verifier-gated process supervision.}
We answer this question with \ours (\oursfull), a verifier-gated extension of GRPO that uses process supervision only when the verifier is uninformative. Concretely, \ours resolves the three challenges with three corresponding design choices. \textbf{(S1) Gate PRM supervision by verifier informativeness.} When verifier rewards induce a meaningful preference among sampled trajectories, \ours reduces exactly to standard GRPO; PRM supervision is used only when all sampled trajectories receive zero verifier reward and GRPO would otherwise produce zero reward-driven gradients. \textbf{(S2) Use future-cumulated rewards for continuation credit.} Instead of collapsing PRM step scores into a single trajectory-level scalar, \ours converts them into future-cumulated rewards so that each token is credited according to the quality of the continuation it enables. \textbf{(S3) Convert PRM feedback into group-normalized token-level advantages.} \ours then transforms these future-cumulated rewards into a dense learning signal that directly replaces the degenerate trajectory-level GRPO update.

This design preserves the strongest property of RLVR while addressing its weakest point. Whenever correctness is distinguishable, \ours optimizes only the verifier signal. If the verifier assigns zero reward to all sampled responses and thus provides no reward-driven learning signal, \ours falls back to PRM-based process supervision. Because the PRM enters only through centered, normalized advantages built from future-cumulated rewards, \ours is invariant to prompt-wise positive affine transformations of those rewards, ruling out a common class of PRM reward hacking. The result is a simple drop-in modification to GRPO that improves exploration, yields more faithful reasoning updates, and substantially reduces the reward-hacking surface relative to PRM-as-outcome baselines.

\textbf{Our Contributions.}
\begin{itemize}[leftmargin=*,topsep=0in]
\setlength\itemsep{1pt}
\setlength\parsep{0pt}
\setlength\parskip{0pt}

\item \textbf{Verifier-gated process supervision for GRPO.}
We introduce \ours, a simple extension of GRPO that preserves standard verifier-based updates when they are informative and falls back to PRM-based token-level supervision only for all-zero verifier groups.

\item \textbf{Continuation-aware token-level credit assignment.}
Instead of collapsing PRM step scores into a single trajectory reward, \ours converts them into future-cumulated rewards and group-normalized token-level advantages, restoring dense learning signals precisely where outcome-only GRPO is silent.

\item \textbf{Reduced susceptibility to PRM reward hacking.}
We show that \ours is invariant to prompt-wise positive affine transformations of future-cumulated PRM rewards, limiting over-optimization of imperfect PRM signals relative to objectives that optimize aggregated PRM rewards.

\item \textbf{Strong empirical gains.}
Training on MATH and evaluating on six reasoning benchmarks, \ours improves average accuracy by $20\%$ and $12\%$ for 1.5B and 7B model respectively, substantially reduces zero-gradient failures and reward-hacking behavior, and improves reasoning quality relative to outcome-only GRPO and PRM-as-outcome baselines.
\end{itemize}

%% file: sections/Problem.tex
\vspace{-2mm}
\section{Why Outcome-Only GRPO and Naive Process Supervision Fail}
\label{sec:problems}
\vspace{-2mm}
Our method starts from two observations. Outcome-only GRPO breaks down on the hard prompts where extra guidance is most needed. The most natural fix---replacing verifier rewards with PRM-derived trajectory rewards---introduces a different failure mode by over-optimizing an imperfect learned signal. For background on GRPO and the distinction between outcome and process rewards, see Appendix Section~\ref{sec:background}. We make three points.

\begin{wraptable}{r}{0.4\textwidth}
    \vspace{-6pt}
    \centering
    \caption{Percentage of zero-reward prompts before and after one epoch of GRPO training for the Qwen2.5-Instruct models.}
    \label{tab:model_training}
    \resizebox{\linewidth}{!}{
    \begin{tabular}{l l c c}
      \toprule
      \textbf{Model} & \textbf{Dataset} & \textbf{Base} & \textbf{1 epoch} \\
      \midrule
      1.5B & MATH & 45.6 & 39.9 \\
      1.5B & DAPO-MATH & 82.9 & 78.2 \\
      \midrule
      7B & MATH & 41.7 & 39.3 \\
      7B & DAPO-MATH & 79.2 & 77.5 \\
      \bottomrule
    \end{tabular}
    }
    \vspace{-6pt}
\end{wraptable}

\noindent\textbf{Outcome-only GRPO suffers from reward degeneracy.}
GRPO fails when all sampled trajectories in a group receive the same reward. If $r_1 = \dots = r_G$ for a group of size $G$, there is no within-group preference signal, so $A_i = 0$ for all $i$ and the \emph{advantage-driven} part of the GRPO update collapses to zero. This is the core \emph{zero-gradient problem} discussed in prior work~\cite{sun2025rl}. Strictly speaking, the full gradient of the regularized objective need not vanish because the KL term still pushes the policy toward the reference model. Thus, on such degenerate batches, learning from reward disappears while only the KL-driven update remains, which can lead to policy drift or even unlearning on hard prompts.

For binary rewards $r_i \in \{0,1\}$, the probability that the reward-driven GRPO update is degenerate is
    $\mathbb{P}\!\left(A_1=\cdots=A_G=0\right) = p^G + (1 - p)^G,$
where $p = \mathbb{P}(r = 1)$ and $G$ is the group size. This probability is high when $p$ is close to $0$ or $1$. The $p \approx 1$ regime corresponds to easy prompts and is mostly harmless. The real problem is $p \approx 0$: on hard prompts, almost all sampled trajectories receive zero reward, updates vanish, and learning stalls exactly where the model most needs signal. Table~\ref{tab:model_training} shows that many prompts remain in this degenerate regime even after one epoch of GRPO training.




\noindent\textbf{Outcome rewards give no step-level credit assignment.}
Outcome-only GRPO applies the same trajectory-level advantage to every token in a sampled response. If a trajectory $y=(y_1,\dots,y_T)$ receives advantage $A_i$, the policy gradient scales every token log-probability by that same scalar. It therefore cannot separate genuinely useful steps from incidental or spurious ones, which is especially problematic in long mathematical reasoning where only a few intermediate steps determine the final answer. Verifier rewards are thus reliable but poor at local credit assignment: they say whether the answer is correct, not which steps deserve credit or blame. As a result, training can reinforce shortcuts that preserve accuracy without improving the reasoning process.

\noindent\textbf{Naive PRM integration fixes sparsity but creates new risks.}\label{sec:prm_problems}
The most direct response to reward sparsity is to aggregate step-level PRM scores into a single trajectory reward and optimize it instead of the verifier reward. Following~\cite{zhang2025interplay}, we call this family of approaches \textsc{PRM-as-ORM}, because it treats a process reward model as if it were an outcome reward model. Formally, let a PRM produce reasoning-step scores $r_\phi(x,y_{<j},y_j)$. A trajectory reward $R(x,y)$ is then often formed either by the \textit{minimum}, $R(x,y)=\min_{j=1}^{S} r_\phi(x,y_{<j},y_j)$, or by the \textit{product}, $R(x,y)=\prod_{j=1}^{S} r_\phi(x,y_{<j},y_j)$.

PRM-as-ORM is attractive because it replaces sparse verifier rewards with a dense learned signal. But it is risky for two reasons. First, any systematic PRM bias is lifted directly to the trajectory level. Second, because the objective is an \emph{absolute} PRM-derived scalar, the policy can improve reward by exploiting generic patterns the PRM prefers even when they do not improve correctness. This is exactly the setting where reward hacking becomes likely~\citep{anonymous2025cut}.
Figure~\ref{fig:prm_score_dapo_math} illustrates this risk. Even strong PRMs are much better calibrated on easier benchmarks than on harder ones. On MATH-500, PRM product scores align reasonably well with correctness. On OlympiadBench, and especially on DAPO-MATH-17k, the distributions are far more misaligned with ground truth, including many correct trajectories that still receive low PRM scores. Directly optimizing such a reward can therefore amplify PRM-specific biases rather than improve genuine reasoning quality.

\begin{figure}[t]
    \centering
    \includegraphics[width=0.99\linewidth, trim={0 0 0 0.5cm}]{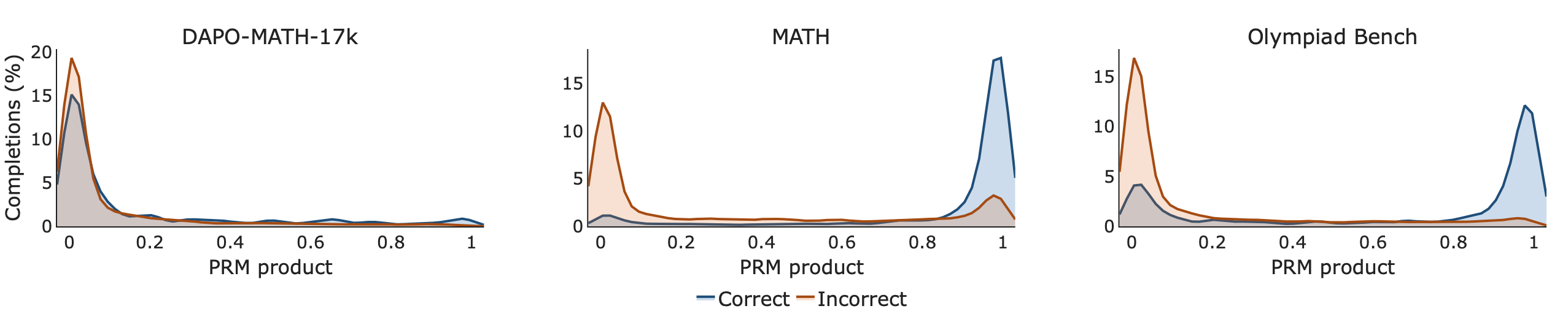}
    \vspace{-0.5em}
    \caption{\textbf{Distribution of PRM product scores by correctness} on MATH-500 \cite{hendrycks2021measuring}, OlympiadBench \cite{he2024olympiadbench}, and DAPO-MATH-17k \cite{yu2025dapo}. Ideally, correct traces should receive PRM scores close to 1, while incorrect traces should receive scores close to 0. On MATH-500, PRM scores are reasonably aligned with ground-truth correctness. On the more challenging datasets, especially DAPO-MATH-17k, the distributions become substantially more skewed, indicating a larger mismatch between PRM preference and actual correctness.}
    \label{fig:prm_score_dapo_math}
    \vspace{-1em}
\end{figure}

These observations motivate our design choice: use PRM supervision only when verifier rewards are uninformative, and keep it as a relative, step-level signal rather than a replacement outcome reward.

%% file: sections/Method.tex
\vspace{-2mm}
\section{Verifier-Gated Token-Level Supervision for GRPO}
\label{sec:method}
\vspace{-2mm}
Following the design choice, we introduce \ours, a verifier-gated extension of GRPO that preserves verifier-based updates whenever they are informative and invokes PRM supervision only when standard GRPO would otherwise receive no learning signal. Implementing this switch requires resolving three design challenges raised in the introduction: how to use PRMs without letting them override a trustworthy verifier, how to prevent brittle trajectory-level PRM aggregation from distorting credit assignment, and how to integrate dense PRM feedback into a GRPO-style update. We address these challenges in turn.

\vspace{-2mm}
\subsection{Challenge 1: Using PRMs Without Letting Them Replace the Verifier}
\vspace{-2mm}

The design principle behind \ours is simple:
\textit{Use verifier rewards whenever they induce a meaningful preference among sampled trajectories; use token-level PRM supervision only when all sampled trajectories receive zero verifier reward and GRPO would otherwise produce zero reward-driven gradients.}

\textbf{Gating Rule.} Concretely, for a prompt $x$, let $\{r_i\}_{i=1}^G$ denote the verifier rewards for the sampled group. If at least one sampled trajectory is correct and at least one is incorrect, then the induced GRPO advantages are informative and \ours reduces exactly to standard GRPO. If instead $r_1 = \cdots = r_G = 0$, then GRPO's group-relative advantages collapse to zero on that hard prompt and the policy receives no reward-driven update. In that case, \ours switches to PRM-based token-level supervision derived from future-cumulated step rewards. We do not activate the PRM when $r_1 = \cdots = r_G = 1$, since those fully correct groups are already solved and are not the failure mode of interest.

This gating rule resolves the first challenge. First, it preserves the strongest property of RLVR: whenever correctness is distinguishable, policy improvement is governed entirely by the verifier. Second, it restricts the use of imperfect PRM signals to exactly those hard prompts where the verifier assigns zero reward to every sampled trajectory. In this sense, PRM supervision acts as a fallback recovery mechanism rather than a replacement source of truth.
\vspace{-2mm}
\subsection{Challenge 2: Avoiding Brittle Trajectory-Level PRM Aggregation}
\vspace{-2mm}

The second challenge is that PRMs score completed reasoning \emph{steps}, while the policy is trained at the token level. A naive solution is to aggregate all PRM step scores into a single trajectory-level scalar and optimize that value directly. But this makes the learning signal brittle: a single biased step score can distort the effective reward assigned to the whole trajectory, and the model loses the main benefit of process supervision, namely fine-grained credit assignment. Our solution is to keep PRM feedback at the step level and translate it into token-level continuation credit, which we call Future-Cumulated Token Rewards (FCTR). 

\textbf{Future-Cumulated Token Rewards (FCTR).}
Let trajectory $y_i$ contain $S_i$ reasoning steps, and let $r_{i,j}$ denote the PRM reward for step $j$. For each step, we define the future-cumulated reward
\begin{equation}
    c_{i,j} = \sum_{k=j}^{S_i} r_{i,k}.
\end{equation}
If token $t$ belongs to step $j$, then token $t$ inherits the value $c_{i,j}$. Intuitively, this means a token is rewarded not only for helping produce a good local step, but also for setting up strong downstream reasoning. Earlier tokens therefore receive credit for enabling later high-quality steps, while tokens in flawed prefixes are penalized through the poor continuation they induce.

This construction resolves the second challenge by avoiding a single absolute PRM score for the entire response. Instead, each step is evaluated through the quality of the continuation it supports. This future cumulation is useful because PRM scores can be noisy and locally myopic: a step that looks good in isolation may later lead nowhere, while an uncertain intermediate step may enable a correct solution. Summing future rewards therefore uses the continuation as extra evidence about whether a step was genuinely useful, which smooths out local PRM noise and gives more credit to steps that support strong downstream reasoning. Our experiments also include a simplified ablation that removes future cumulation, allowing us to isolate the value of this continuation-aware credit assignment.

\vspace{-2mm}
\subsection{Challenge 3: Integrating Dense PRM Feedback into GRPO}
\vspace{-2mm}

The third challenge is that standard GRPO is formulated with a trajectory-level advantage, whereas PRM supervision is useful precisely because it is dense and step-sensitive. Rather than forcing PRM feedback back into a single scalar reward, \ours replaces the degenerate trajectory-level signal with normalized token-level advantages derived from the future-cumulated rewards above.

\textbf{Token-Level Advantage.}
Let $c_{i,j}$ denote the future-cumulated reward assigned to every token in step $j$ of trajectory $i$. We define the corresponding PRM-derived token-level advantage as
\begin{equation}
    A_{i,j} = \frac{c_{i,j} - \bar c}{\sigma(c)},
\end{equation}
where $\bar c$ is a prompt-level baseline and $\sigma(c)$ is the standard deviation computed over all future-cumulated rewards in the sampled group.

Specifically,
\begin{equation}
    \bar c = \frac{\sum_{i=1}^{G}\sum_{j=1}^{S_i} c_{i,j}}{\sum_{i=1}^{G}S_i}.
\end{equation}
Every token in step $j$ inherits the same advantage $A_{i,j}$. Intuitively, $A_{i,j}$ measures whether the continuation enabled by tokens in step $j$ is better or worse than the \emph{average} continuation produced for that prompt.

This construction resolves the third challenge. First, it provides dense credit assignment within a trajectory instead of assigning the same scalar to every token. Second, it compares token-level learning signals \emph{relative} to other sampled continuations for the same prompt, rather than optimizing an absolute PRM score. Third, standardization by $\sigma(c)$ keeps update magnitudes stable without introducing a learned value function.

\textbf{Intuition Behind Why \ours's Token Advantage Is Harder to Reward Hack.} A common form of reward hacking is \emph{prompt-wise reward distortion}: the policy discovers a stylistic or templated behavior that systematically shifts or rescales the reward assigned to nearly every token for a prompt without improving actual correctness. This is distrinctly different than outcome-level PRM aggregation strategy which optimizes an absolute PRM-derived scalar. On the other hand \ours uses centered token-level advantages computed from future-cumulated rewards.

\begin{proposition}[Advantage invariance to prompt-wise positive affine PRM transformations]
\label{prop:invariance_inflation}
Suppose a policy update induces a prompt-wise positive affine transformation
$
    c'_{i,j} = a(x)c_{i,j} + b(x) \text{ for all } i,j,
$
for some $a(x)>0$ and $b(x)\in\mathbb{R}$ that are independent of $i$ and $j$. Then the resulting \ours advantages are invariant:
$
    A'_{i,j} = A_{i,j} \text{ for all } i,j.
$
Consequently, uniform additive shifts and positive rescalings of token-level rewards across the sampled group do not create a learning signal in \ours. By contrast, outcome-level objectives built from absolute PRM scores are generally not invariant to the same transformations.
\end{proposition}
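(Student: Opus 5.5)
The argument is a direct computation from the definitions of $\bar c$ and $\sigma(c)$: we track how the group mean and the group standard deviation transform under $c'_{i,j}=a(x)c_{i,j}+b(x)$. Throughout, the prompt $x$ is fixed, so $a=a(x)>0$ and $b=b(x)$ are constants across the whole sampled group. Write $N=\sum_{i=1}^G S_i$ for the total number of steps.

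\textbf{Step 1: the baseline.} By linearity of the weighted average defining $\bar c$,
\[
\bar c' = \frac{1}{N}\sum_{i=1}^{G}\sum_{j=1}^{S_i}\bigl(a c_{i,j}+b\bigr)=a\bar c+b .
\]
This uses that $b$ is the same for every $(i,j)$, so it passes through the average unchanged. Consequently $c'_{i,j}-\bar c' = a\,(c_{i,j}-\bar c)$, and the additive shift cancels exactly.

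\textbf{Step 2: the scale.} We take $\sigma(c)$ to be the standard deviation over the same multiset $\{c_{i,j}\}$ with the same baseline $\bar c$; either the population or the sample normalization works, since the argument does not depend on which. Using the centered identity from Step 1,
\[
\sigma(c')^2=\frac{1}{N}\sum_{i,j}a^2(c_{i,j}-\bar c)^2=a^2\sigma(c)^2 .
\]
Because $a>0$, this gives $\sigma(c')=|a|\sigma(c)=a\,\sigma(c)$.

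\textbf{Step 3: the ratio.} Dividing the two relations gives $A'_{i,j}=a(c_{i,j}-\bar c)/(a\sigma(c))=A_{i,j}$. Every token in step $j$ inherits $A_{i,j}$, so the token-level advantages, and hence the advantage-weighted policy-gradient term, are unchanged. This establishes the ``consequently'' clause: uniform shifts and positive rescalings produce no learning signal.

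The only delicate point is the degenerate case $\sigma(c)=0$, where the advantage is undefined or set by convention (for example, an $\epsilon$ added to the denominator, or $A=0$). I would note that $\sigma(c')=a\sigma(c)$, so $\sigma(c)=0$ exactly when $\sigma(c')=0$ and the degeneracy is preserved. Under an exact convention such as $A=0$, invariance therefore still holds. With an $\epsilon$-regularized denominator, invariance holds only approximately (exactly when $\epsilon=0$), and I would state the proposition for the unregularized definition given in the paper. Positivity of $a$ is essential here: if $a<0$, every advantage flips sign.

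For the contrast with outcome-level objectives, a single counterexample suffices. Take $R(x,y)$ to be an aggregated PRM score optimized as an absolute scalar, e.g.\ the product $\prod_j r_\phi$ or the expected reward $\mathbb{E}_{y\sim\pi}[R(x,y)]$. A uniform inflation of step scores, say $r\mapsto r+\delta$ with $\delta>0$, strictly increases the product, and likewise raises $\mathbb{E}_{y\sim\pi}[R(x,y)]$ under a shift $b(x)>0$. A policy update that induces such a shift is therefore rewarded by the absolute objective. In \ours, by the computation above, the same shift leaves every advantage unchanged. This shows that the absolute objectives are ``generally not invariant,'' which is all the statement claims.
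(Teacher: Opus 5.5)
Your Steps 1--3 are the paper's proof: the baseline maps to $a\bar c+b$, the standard deviation to $a\sigma(c)$ because $a>0$, and the ratio is unchanged. Your remarks on $\sigma(c)=0$, $\epsilon$-regularization, and $a<0$ are correct additions.

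The gap is in your contrast paragraph. The proposition's transformation acts on the future-cumulated rewards $c_{i,j}$, but your counterexample shifts the per-step scores, $r_{i,k}\mapsto r_{i,k}+\delta$. Since $c_{i,j}=\sum_{k\ge j}r_{i,k}$, this induces $c_{i,j}\mapsto c_{i,j}+(S_i-j+1)\delta$. That shift depends on $i$ and $j$ through the number of remaining steps, so it is not of the form $a(x)c_{i,j}+b(x)$, and Steps 1--3 do not apply. In fact the \ours advantages really do change. Take $G=2$, $S_1=1$, $S_2=2$, $r_{1,1}=0.9$, $r_{2,1}=r_{2,2}=0.1$. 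The centered rewards are $(0.5,-0.2,-0.3)$, but after $\delta=1$ they are $(\tfrac{1}{6},\tfrac{7}{15},-\tfrac{19}{30})$, so even the ordering of the advantages flips. Your sentence ``in \ours, the same shift leaves every advantage unchanged'' is therefore false. As written, you have not shown non-invariance of an outcome-level objective under the \emph{same} transformation the proposition is about. This also shows that \ours is not invariant to uniform shifts of raw PRM step scores: future cumulation turns such a shift into a preference for earlier steps and longer trajectories.

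The fix is to pull the transformation back to step scores. Because $r_{i,k}=c_{i,k}-c_{i,k+1}$ for $k<S_i$ and $r_{i,S_i}=c_{i,S_i}$, the map $c\mapsto ac+b$ is exactly $r_{i,k}\mapsto a\,r_{i,k}$ for all $k$, plus an extra $+b$ on the final step only. Under it:
\begin{itemize}
\item the sum aggregate $\sum_k r_{i,k}=c_{i,1}$ becomes $ac_{i,1}+b$;
\item the product becomes $a^{S_i-1}(a r_{i,S_i}+b)\prod_{k<S_i}r_{i,k}$.
\end{itemize}
With $b=0$ and $a>1$, the product is multiplied by $a^{S_i}$. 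Then $\mathbb{E}_{y\sim\pi}[R(x,y)]$ strictly increases whenever $\mathbb{P}(R>0)>0$, while every \ours advantage is unchanged by your Steps 1--3. When the $S_i$ differ, the factor $a^{S_i}$ varies across the group, so this example also breaks invariance of group-normalized product rewards. That is a sharper contrast than the absolute-objective framing alone gives.
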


A detailed proof is provided in Appendix Section~\ref{appx:proof}. This proposition formalizes the core robustness intuition behind \ours: by operating on centered token-level comparisons derived from future-cumulated rewards rather than absolute PRM totals, the method removes one common avenue for PRM reward hacking.

\vspace{-2mm}
\subsection{The \ours Objective}
\vspace{-2mm}
Putting the pieces together, let $A_i^{\mathrm{GRPO}}$ denote the standard verifier-based GRPO advantage for trajectory $i$. The effective advantage applied to tokens in step $j$ of trajectory $i$ is
\[
\widetilde{A}_{i,j}=
\begin{cases}
A_i^{\mathrm{GRPO}}, & \text{if the sampled verifier rewards are mixed},\\
\dfrac{c_{i,j}-\bar c}{\sigma(c)}, & \text{if } r_1 = \cdots = r_G = 0.
\end{cases}
\]
where $c_{i,j} = \sum_{k=j}^{S_i} r_{i,k}$ and $\bar c = \frac{\sum_{i=1}^{G}\sum_{j=1}^{S_i} c_{i,j}}{\sum_{i=1}^{G}S_i}$.
Thus \ours does not replace RLVR. It preserves verifier-based learning whenever the verifier provides a preference signal, and only falls back to PRM guidance when verifier rewards are temporarily silent.

%% file: sections/Experiments.tex
\vspace{-2mm}
\section{Experiments and Results}
\vspace{-2mm}

We evaluate \ours along four axes: benchmark accuracy, mitigation of zero-gradient failures, robustness to reward hacking, and improvement in reasoning quality.
\vspace{-2mm}
\subsection{Setup}

\textbf{Models.}
We use Qwen2.5-1.5B-Instruct and Qwen2.5-7B-Instruct \citep{yang2024qwen25mathtechnicalreportmathematical} as policy models. For process supervision, we employ Qwen2.5-Math-PRM-7B \citep{prmlessons} and Math-Shepherd \citep{wang2024math}. Verifier-based outcome rewards are computed using the \textit{math-verify} library.

\textbf{Datasets.}
Training is performed on GSM8K \citep{cobbe2021training} and MATH \cite{hendrycks2021measuring}. Evaluation is conducted on six reasoning benchmarks: GSM8K \citep{cobbe2021training}, MATH-500 \cite{hendrycks2021measuring}, AMC, AIME-24 \cite{aime24}, MinervaMath \cite{lewkowycz2022solving}, and OlympiadBench \cite{he2024olympiadbench}. Unless otherwise specified, results are reported with zero-shot sampling-based decoding. Full training and evaluation settings, including hyperparameters for the 1.5B, 7B, PRM-enabled, and decoding configurations, are provided in Appendix Section~\ref{appx:training_details}.

\textbf{Baselines.}
We compare \ours against \begin{itemize*}[label={}, itemjoin={{; }}, itemjoin*={{; and }}]
\item \textsc{Vanilla-GRPO} \cite{shao2024deepseekmath}, which uses only verifier-based outcome rewards
\item \textsc{Dr-GRPO} \cite{liu2025understanding}, which fixes the bias in vanilla GRPO
\item \textsc{DAPO} \cite{yu2025dapo}, which introduces techniques such as decoupled clipping ranges and dynamic sampling
\item \textsc{PRM-as-ORM} \cite{zhang2025interplay}, which aggregates step-level PRM scores into a trajectory-level reward using product aggregation
\item \textsc{\ours (Simplified)}, which removes future cumulation in the step-reward advantage calculation
\end{itemize*}. We also compare directly with TreeRPO \cite{yang2025treerpo}, a state-of-the-art method for adding process supervision to GRPO, using the same training and evaluation setup.

\subsection{Performance on Reasoning Benchmarks}
\vspace{-2mm}

Table~\ref{tab:performance_results} summarizes the main benchmark results across six reasoning datasets. For both 1.5B and 7B, we observe a clear improvement in the performance of \ours compared to all baselines, including \textsc{PRM-as-ORM} and other GRPO variants. At the 7B scale, \ours achieves the best or tied-best performance, indicating that the \textbf{proposed objective scales favorably with model size.}

\textbf{Improvement from future-cumulated rewards:} Using the 1.5B model, we show that \ours outperforms \textsc{\ours (Simplified)}, which does not use future-cumulated rewards. \\
\textbf{Improvement from the verifier gate:} We also remove the gate from \ours, leaving only PRM rewards, and observe a significant drop in performance, demonstrating the value of retaining verifier-based supervision. To further show the usefulness of the gate, we run the same experiment for the \textsc{PRM-as-ORM} variant and observe the same trend. Overall, these results suggest that the gains from \ours come from a better form of credit assignment and a more robust optimization objective.

\textbf{Comparison with state-of-the-art intrinsic process-supervision approaches:} We provide a direct comparison with TreeRPO \cite{yang2025treerpo} in Appendix~\ref{appx:TreeRPO}. Like prior intrinsic process-supervision methods, TreeRPO relies on RLVR and intrinsic process signals and is therefore constrained by the limitations of RLVR and the policy model's capabilities, unlike \ours.

\textbf{Generalizability across PRMs:} Table~\ref{tab:math_shepherd_prm_results} evaluates performance under Math-Shepherd PRM, which is weaker than Qwen-PRM. The qualitative conclusion remains similar: performance does not collapse when the PRM changes, and \ours continues to outperform \textsc{PRM-as-ORM} and the GRPO baseline on most benchmarks. This result supports the broader claim that the improvements are not narrowly tied to Qwen-PRM but transfer to other PRMs as well.

\vspace{-2mm}
\begin{table}[htbp]
    \centering
    \caption{\textbf{Pass@1 accuracy on six mathematical reasoning benchmarks.} We report results for Qwen2.5-Instruct policies at 1.5B and 7B scale. The PRM model used is Qwen-PRM-7B. \ours is competitive with strong baselines at 1.5B and achieves the best or tied-best results across all reported 7B benchmarks, while clearly outperforming \textsc{PRM-as-ORM}. }
    \label{tab:performance_results}
    \small
    \setlength{\tabcolsep}{1pt}
    \resizebox{\linewidth}{!}{%
    \begin{tabular}{@{}lcccccc@{}}
        \toprule
         & \textbf{AIME} & \textbf{AMC} & \textbf{GSM8K} & \textbf{Math500} & \textbf{Minerva} & \textbf{Olympiad} \\
        \midrule
        \multicolumn{7}{c}{Qwen2.5-1.5B-Instruct} \\
        \midrule
        GRPO-Vanilla   \cite{shao2024deepseekmath}               & 0.00 & 15.10 & 71.11 & 39.80 & 7.35 & 15.29 \\
        Dr.GRPO   \cite{liu2025understanding}                     & 3.30 & 15.66 & 71.80 & 44.40 & 8.82 & 15.50 \\
        DAPO  \cite{yu2025dapo}                        & 0.00 & 14.10 & 71.49 & 40.20 & 7.72 & 14.07 \\
        PRM-as-ORM \cite{zhang2025interplay}         & 0.00 & 3.61  & 31.16 & 14.20 & 6.25 & 5.04  \\
        PRM-as-ORM (+Gate)         & 0.00 & 25.30  & 71.00 & 39.60 & 8.80 &  15.50 \\
        \ours (Simplified)  & 0.00 & 21.69 & 70.89 & 38.40 & 7.35 & 13.63 \\
        \ours (-Gate) & 3.33 & 7.23 & 67.7 & 34.00 & 7.35 & 10.37 \\
        \ours (Ours)  & 0.00 (+0.00\%) & \textbf{27.71 (+83.51\%)} & \textbf{72.10 (+1.39\%)} & 41.60 (+4.52\%) & \textbf{9.93 (+35.10\%)} & \textbf{15.50 (+1.37\%)} \\
        \midrule
        \multicolumn{7}{c}{Qwen2.5-7B-Instruct} \\
        \midrule
        GRPO-Vanilla \cite{shao2024deepseekmath}   & 6.67 & 42.17 & 92.33 & 61.67 & 18.38 & 30.67 \\
        PRM-as-ORM  \cite{zhang2025interplay}                   & 3.33 & 40.96 & 92.00 & 61.00 & 17.65 & 30.67 \\
        \ours (Ours)                 & \textbf{10.00 (+49.93\%)} & \textbf{45.78 (+8.56\%)} & 92.33 (+0.00\%) & \textbf{62.33 (+1.07\%)} & \textbf{20.22 (+10.01\%)} & \textbf{31.00 (+1.08\%)} \\
        \bottomrule
    \end{tabular}
    }
\end{table}

\vspace{-4mm}

\begin{table}[htbp]
    \centering
    \small
    \setlength{\tabcolsep}{4pt}
    \caption{\textbf{Pass@1 accuracy on six mathematical reasoning benchmarks.} We report results for Qwen2.5-Instruct policies at 1.5B and 7B scale. The PRM model used is Math-Shepherd PRM. }
    \begin{tabular}{@{}lcccccc@{}}
        \toprule
        Method & AIME & AMC & GSM8K & Math500 & Minerva & Olympiad \\
        \midrule
        PRM-as-ORM & 0.00 & 18.07 & 64.52 & 36.20 & 9.19 & 14.07 \\
        DeepSeek's PRM & \textbf{3.33} & \textbf{21.69} & \textbf{70.36} & \textbf{41.00} & 8.09 & \textbf{15.41} \\
        \bottomrule
    \end{tabular}
    \label{tab:math_shepherd_prm_results}
\end{table}

\textbf{Ablations on alternative ways of combining PRM rewards with outcome rewards} to clarify why the proposed verifier-gated design is empirically better.
\begin{itemize}[leftmargin=*,topsep=0in]
\setlength\itemsep{0pt}
\setlength\parsep{0pt}
\setlength\parskip{0pt}
\setlength\leftskip{0pt}
    \item In Appendix Section~\ref{appx:hard_subset}, we train only on the hard-subset prompts using PRM rewards. These hard subsets are constructed by selecting prompts that still receive zero reward after one epoch of GRPO training.
    \item In Appendix Section~\ref{appx:weighted_combination}, we study a separate always-on reward-mixing ablation that adds a weighted verifier reward to the step-level PRM reward on every prompt, without gating.
\end{itemize}

Both alternatives are more ad hoc than \ours and lack its clean robustness story. The hard-subset variant depends on pre-collecting degenerate prompts, making it difficult to use in a continual-learning setting, while the always-on reward-mixing variant introduces an additional hyperparameter and blends verifier and process supervision everywhere rather than separating them through the gate. These trade-offs further motivate the verifier-gated formulation used in \ours.

\begin{figure*}[ht!]
    \centering
    \includegraphics[width=\textwidth, trim={0 15 10 50}, clip]{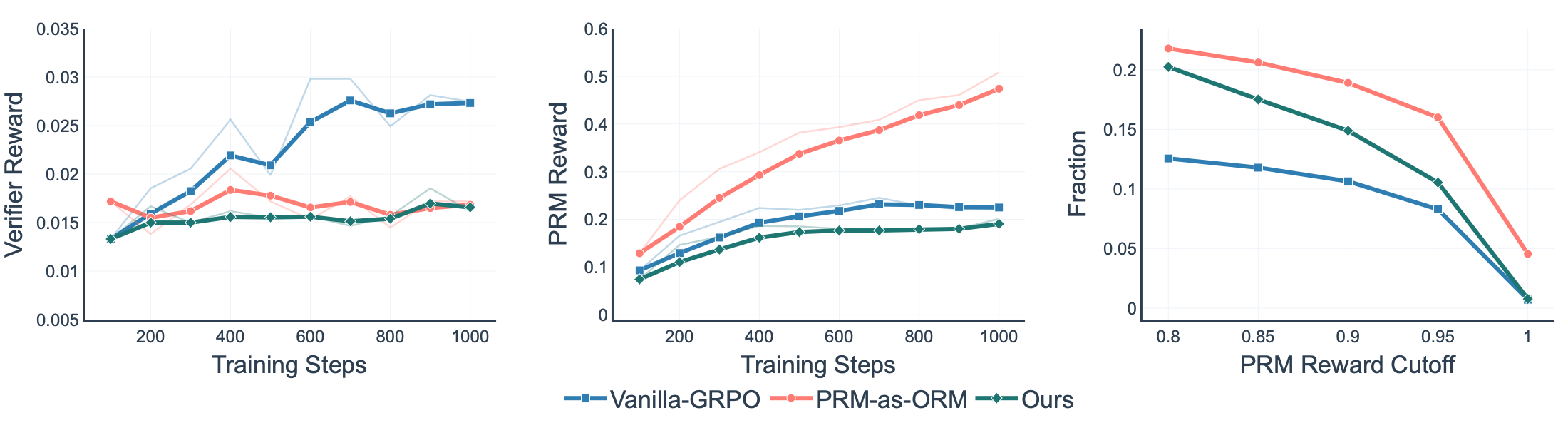}
    \caption{\textbf{(a): Comparison of verifier and PRM rewards over training steps to highlight reward hacking.} The left subplot shows the verifier reward, and the middle subplot displays the PRM reward across training steps. Each series is presented with Exponential Moving Average (EMA) Smoothed trend line and its corresponding raw data shown as a more transparent, fainter line in the background. \textbf{(b): Fraction of model responses above PRM reward cutoff.} Each line represents a method, showing the fraction of model responses that receive PRM reward above the cutoff value.}
    \label{fig:reward_over_time}
    \vspace{-1em}
\end{figure*}

\vspace{-2mm}
\subsection{Mitigating the Zero-Gradient Problem}
\vspace{-2mm}



\ours activates process supervision only when the verifier provides no learning signal and the standard GRPO update would otherwise collapse. As shown in Figure~\ref{fig:reward_over_time}(b), this yields a substantially faster reduction in the fraction of prompts that continue to receive zero verifier reward, indicating better exploration and a more reliable escape from the zero-gradient regime.

\vspace{-2mm}
\subsection{\ours is Resistant to Reward Hacking}
\vspace{-2mm}
A central concern with PRM-based supervision is reward hacking. To assess this, we evaluate model outputs using two metrics: (1) \textbf{verifier reward}, based on final-answer correctness, and (2) \textbf{PRM reward}, computed as the product of softmax-normalized step scores from the PRM.

Figure~\ref{fig:reward_over_time} tracks both rewards over training. For \textsc{PRM-as-ORM}, PRM reward rises rapidly while verifier reward remains nearly flat, indicating exploitation of PRM-specific biases rather than genuine task improvement. In contrast, \ours shows more closely aligned growth in verifier and PRM rewards, suggesting that higher PRM scores reflect real gains in correctness. This alignment is strong evidence that \ours is substantially less prone to reward hacking. This evaluation is conducted on the selected hard-prompt subset described in Appendix Section~\ref{appx:hard_subset}; accordingly, verifier reward remains close to zero and shows limited improvement for \textsc{Vanilla-GRPO}, \textsc{PRM-as-ORM}, and \ours.

Figure~\ref{fig:reward_over_time}(b) further reports the fraction of responses that receive high PRM reward despite incorrect final answers. Across a wide range of thresholds, \ours produces substantially fewer such responses than \textsc{PRM-as-ORM}, approaching the behavior of \textsc{Vanilla-GRPO} as the cutoff increases.

\textbf{Qualitative analysis of reward hacking.}
To complement the quantitative analysis above, we present qualitative responses to the same prompt generated by \textsc{PRM-as-ORM} (hacked response) and \ours (unhacked response) in Figure~\ref{fig:example_responses}.
Although the \textsc{PRM-as-ORM} response follows a structured, step-by-step format and receives a high process reward score, it contains a fundamental reasoning error that leads to an incorrect final answer. The response is verbose, but its reasoning remains unclear.
This behavior illustrates reward hacking: the policy optimizes for patterns favored by the process reward model rather than for faithful reasoning aligned with correctness. In contrast, \ours produces a coherent reasoning trajectory that correctly decomposes the problem and arrives at the correct solution.
These examples highlight how integrating process supervision at the objective level, rather than replacing outcome rewards, leads to more faithful reasoning behavior. Appendix Section~\ref{appx:reward_hacking_behavior} provides additional evidence, including lexical patterns and length distributions, showing that reward-hacked outputs often overproduce recurring stylistic tokens instead of improving solution quality.

\begin{figure*}[htp]
    \centering
    \begin{tcolorbox}[colback=gray!4,colframe=black!20,width=\textwidth,boxrule=0.4pt,arc=2pt]
        \textbf{Prompt.} A circle is inscribed in a square, then a square is inscribed in this circle, and finally, a circle is inscribed in this square. What is the ratio of the area of the smaller circle to the area of the larger square?
    \end{tcolorbox}
    \vspace{-0.2em}
    \begin{minipage}[t]{0.48\textwidth}
        \begin{tcolorbox}[title=Hacked Response,colback=red!3,colframe=red!35!black,boxrule=0.5pt,arc=2pt,top=3pt,bottom=3pt]
        \small
        Step 1: Identify the given shapes and their relationships. We start with a large square and an inscribed circle. Next, we have another square inscribed within this circle. Finally, there's another circle inscribed within this second square. We need to find the ratio of the area of the smaller circle to the area of the larger square. \\
        ...\\
        ...\\
        Step 5: .. \\
        Step 6: Find the radius of the third circle. The third circle is inscribed in the second square. The radius of the third circle is half the side length of the second square: $r_3 = \frac{k}{2} = \frac{s/\sqrt{2}}{2} = \frac{s}{2\sqrt{2}}$
        \end{tcolorbox}

    \end{minipage}\hfill
    \begin{minipage}[t]{0.48\textwidth}
        \begin{tcolorbox}[title=Unhacked Response,colback=green!3,colframe=green!35!black,boxrule=0.5pt,arc=2pt,top=3pt,bottom=3pt]
        \small
        Step 1: Consider a larger square with side length $s$.\\
        Step 2: The side length of this square will be equal to the diameter of the initial circle inscribed within it.\\
        ...\\
        Step 8: Radius of second circle $r_2 = d/2 = ((\sqrt(2*s))/2 )/2 = s/(2*\sqrt(2))$ \\
        Step 9: Area of the larger square $A_{\text{large-sq}} = s^2$. \\
        Step 10: Area of the smaller circle $ = \pi(r_2)^2 = \pi\left(s/2\sqrt{2}\right)^2 = \pi s^2/8$. \\
        Step 11: Ratio of the areas is $(\pi s^2/8) /s^2 = \pi/8$. This represents the ratio of the area of the smaller circle to that of the larger square.
        \end{tcolorbox}
    \end{minipage}

    \caption{\textbf{Qualitative comparison of hacked and unhacked responses for the same prompt.} The hacked response exhibits format-following behavior and locally plausible reasoning patterns that can inflate PRM scores without producing a correct solution. In contrast, the unhacked response follows a similar reasoning structure but reaches the correct geometric derivation and final answer.}
    \label{fig:example_responses}
    \vspace{-3mm}
\end{figure*}

\vspace{-2mm}
\subsection{\ours Training Improves the Reasoning Quality of the Model}
\vspace{-2mm}

\begin{wraptable}{r}{0.54\linewidth}
    \vspace{-1.1em}
    \centering
    \small
    \setlength{\tabcolsep}{4pt}
    \caption{Cross-PRM verification results using Qwen-PRM and Math-Shepherd PRM, with Math-Shepherd acting as an external evaluator.}
    \label{tab:cross_prm_verification}
    \begin{tabular}{lccc}
        \toprule
        Method & Qwen-PRM & Math-Shepherd & RLVR \\
        \midrule
        PRM-as-ORM & 0.5081 & 0.0331 & 0.017 \\
        \ours & 0.2011 & 0.1424 & 0.016 \\
        \bottomrule
    \end{tabular}
    \vspace{-0.6em}
\end{wraptable}

To test whether the gains from \ours reflect genuinely improved reasoning rather than overfitting to the training PRM reward, we additionally evaluate model outputs in a cross-PRM setting. Specifically, we score responses using both Qwen-PRM and Math-Shepherd, with Math-Shepherd serving as an external evaluator. Table~\ref{tab:cross_prm_verification} reports the average rewards from both PRMs together with the RLVR reward. Compared with \textsc{PRM-as-ORM}, \ours achieves a substantially lower score under Qwen-PRM but a markedly higher score under Math-Shepherd, while maintaining a comparable RLVR reward. This pattern suggests that \textsc{PRM-as-ORM} over-optimizes for the idiosyncrasies of Qwen-PRM, whereas \ours improves reasoning quality and logical consistency in a way that transfers better to an independent PRM. The high Qwen-PRM reward for \textsc{PRM-as-ORM} further indicates clear reward hacking with respect to the training PRM. A qualitative example is shown in Figure~\ref{fig:example_responses}.

%% file: sections/Related_Works.tex
\vspace{-2mm}
\section{Related Works}
\vspace{-2mm}
\textbf{Integrating process supervision into RLVR.}
Several recent works integrate process supervision into GRPO-style frameworks. A common strategy is to aggregate step-level PRM scores into a single trajectory-level reward, effectively replacing verifier rewards with PRM-derived outcome rewards \citep{lightman2023let, zhang2025interplay}. While this alleviates reward sparsity, it abandons the key advantage of verifier-based supervision and substantially increases susceptibility to reward hacking.
Other methods, such as TreeRPO \citep{yang2025treerpo}, apply GRPO at intermediate reasoning steps by explicitly constructing reasoning trees. Although this enables finer-grained credit assignment, the resulting tree depth grows exponentially with the number of reasoning steps, limiting practical scalability to very shallow reasoning processes.
Overall, prior work either replaces outcome rewards with process rewards or applies GRPO only on shallow step trees, trading off robustness or scalability. An extended discussion is in Appendix~\ref{appx:related_works}.

%% file: sections/Conclusion.tex
\vspace{-2mm}
\section{Conclusion and Limitations}
\vspace{-2mm}
In this work, we introduced \ours, a verifier-gated variant of GRPO that adds step-level supervision through future-cumulated rewards while preserving the reliability of verifier rewards. The method is simple: use standard verifier-driven GRPO when the verifier is informative, and use PRM-based step supervision only when all sampled trajectories receive zero verifier reward. This selective design strengthens training by reducing zero-gradient failures, limiting reward hacking relative to outcome-level PRM baselines, and improving cross-PRM transfer, suggesting better reasoning rather than overfitting to a single PRM.

\ours still depends on the availability and quality of process reward models. Although verifier-gating reduces reliance on imperfect PRMs, systematic PRM biases can still affect updates on degenerate prompts. The method also inherits the model's step segmentation, so poor segmentation can weaken credit assignment. Finally, our experiments focus on mathematical reasoning; broader evaluation is needed to test transfer to code generation, tool use, and long-horizon planning.


%% file: sections/Appendix.tex
\section{Limitations and Future Work}

\paragraph{Limitations.}
\ours still depends on the availability and quality of process reward models. Although verifier-gating limits over-reliance on imperfect PRMs, systematic PRM biases can still shape updates on degenerate prompts. Our formulation also treats reasoning steps as units defined by the model's generation format, so suboptimal step segmentation may blunt credit assignment. Finally, our experiments focus on mathematical reasoning; broader evaluation is needed to understand how well the method transfers to domains such as code generation, tool use, or long-horizon planning.

\paragraph{Future Work.}
A natural next step is to move beyond a binary gate and let the strength of process supervision depend on verifier confidence, uncertainty, or difficulty estimates. Another promising direction is to improve the policy and the PRM jointly, so that process supervision becomes more informative as training progresses. Extending verifier-gated process supervision to broader reasoning settings may also clarify when dense learned rewards complement verifiable objectives and when they become a liability.

\section{Training Details and Hyperparameters}
\label{appx:training_details}

Table~\ref{tab:training_hparams_main} summarizes the main training settings for the 1.5B and 7B experiments. Table~\ref{tab:training_hparams_prm} reports the shared PRM configuration used in PRM-enabled runs. Table~\ref{tab:eval_hparams} gives the default evaluation-time decoding settings.

\begin{table}[htbp]
    \centering
    \small
    \setlength{\tabcolsep}{4pt}
    \caption{\textbf{Training settings for the main 1.5B and 7B runs.}}
    \label{tab:training_hparams_main}
    \resizebox{\linewidth}{!}{%
    \begin{tabular}{@{}lll@{}}
        \toprule
        Setting & 1.5B training & 7B training \\
        \midrule
        Base model & Qwen2.5-1.5B-Instruct & Qwen2.5-7B-Instruct \\
        Reward & rule-based math verifier reward & rule-based math verifier reward \\
        Samples per prompt & 8 & 8 \\
        Training steps & 900 & 800 \\
        Prompt length & 256 & 1024 \\
        Generation length & 512 & 2048 \\
        Rollout batch size & 8 & 16 \\
        Micro rollout batch size & 4 & 4 \\
        Train batch size & 16 & 32 \\
        Micro train batch size & 2 & 4 \\
        Temperature / top-p & 1.0 / 1.0 & 1.0 / 1.0 \\
        Discount $\gamma$ & 1.0 & 1.0 \\
        Initial KL coef & 0.0 & 0.0 \\
        Learning rate & $1\times 10^{-6}$ & $1\times 10^{-6}$ \\
        LR scheduler & \texttt{cosine\_with\_min\_lr}  & \texttt{cosine\_with\_min\_lr} \\
        Precision & bf16 & bf16 \\
        LoRA & N/A & rank 64, alpha 128, dropout 0.0 \\
        LoRA target modules & N/A & \makecell[l]{\texttt{q\_proj}, \texttt{k\_proj}, \texttt{v\_proj}, \texttt{o\_proj},\\ \texttt{gate\_proj}, \texttt{up\_proj}, \texttt{down\_proj}} \\
        \bottomrule
    \end{tabular}%
    }
\end{table}

\begin{table}[htbp]
    \centering
    \caption{\textbf{Auxiliary training and evaluation settings.}}
    \label{tab:training_eval_aux}
    \begin{subtable}[t]{0.58\linewidth}
        \centering
        \small
        \setlength{\tabcolsep}{6pt}
        \caption{\textbf{PRM configuration used in PRM-enabled experiments.}}
        \label{tab:training_hparams_prm}
        \begin{tabular}{@{}ll@{}}
            \toprule
            PRM setting & Value \\
            \midrule
            PRM model & Qwen2.5-Math-PRM-7B \\
            PRM step separator & \texttt{<extra\_0>} \\
            Response step parsing & model responses split on \texttt{\textbackslash n\textbackslash n} \\
            PRM batch size & 8 \\
            PRM max steps & 16 \\
            \bottomrule
        \end{tabular}
    \end{subtable}\hfill
    \begin{subtable}[t]{0.36\linewidth}
        \centering
        \small
        \setlength{\tabcolsep}{8pt}
        \caption{\textbf{Default evaluation settings used for reported decoding results.}}
        \label{tab:eval_hparams}
        \begin{tabular}{@{}ll@{}}
            \toprule
            Setting & Value \\
            \midrule
            Temperature & 0.7 \\
            Top-$p$ & 0.95 \\
            Top-$k$ & 50 \\
            \bottomrule
        \end{tabular}
    \end{subtable}
\end{table}

\section{Preliminaries and Notations}
\label{sec:background}

\subsection{Proximal Policy Optimization (PPO)}
PPO \citep{schulman2017proximal} is an actor-critic RL method used for training an LLM policy. It stabilizes learning by constraining the update between the current policy $\pi_\theta$, the behavior policy $\pi_{\text{old}}$ used to generate sampled trajectories, and a fixed reference policy $\pi_{\text{ref}}$ used for KL regularization.
Given a prompt $x$, PPO samples a trajectory $y=(y_1,\dots,y_T) \sim \pi_{\text{old}}(\cdot \mid x)$ and optimizes a clipped surrogate objective. In language-model implementations, the likelihood ratio and clipping are applied at the token level rather than to the full-sequence probability. Let
\[
\rho_t(\theta)=\frac{\pi_\theta(y_t\mid x,y_{<t})}{\pi_{\text{old}}(y_t\mid x,y_{<t})},
\]
and let $A_t$ denote the token-level advantage estimate, typically derived from a learned value function $V_\phi$. The PPO objective is
\begin{equation}
\label{eq:ppo}
\begin{aligned}
\mathcal{L}_{\text{PPO}}(\theta)
=\;& \mathbb{E}\!\left[
\sum_{t=1}^{T}
\min\!\Big(
\rho_t(\theta)\,A_t,\;
\mathrm{clip}\!\big(\rho_t(\theta), 1-\epsilon, 1+\epsilon\big)\,A_t
\Big)
\right] \\
&- \beta\,\mathbb{E}\!\left[
\sum_{t=1}^{T} D_{\mathrm{KL}}\!\big[
\pi_{\theta}(\cdot \mid x,y_{<t})
\,\|\,
\pi_{\mathrm{ref}}(\cdot \mid x,y_{<t})
\big]
\right].
\end{aligned}
\end{equation}

The advantage is typically computed from a learned value function, for example through $A_t \approx R_t - V_\phi(x,y_{<t})$, where $V_\phi$ is trained jointly with the policy. The KL penalty is applied against the fixed reference policy $\pi_{\text{ref}}$ to keep the trained policy from drifting too far from its initialization.

By relying on a learned critic, PPO provides low-variance advantage estimates and enables temporal credit assignment. However, this is memory- and compute-intensive \citep{shao2024deepseekmath}. The next section discusses GRPO, which reduces memory requirements by removing the critic model and using verifiable reward functions.

\subsection{Group Relative Policy Optimization (GRPO)} \label{sec:grpo}

GRPO \citep{shao2024deepseekmath} optimizes language-model
policies using \emph{group-relative} feedback computed over
multiple sampled trajectories, eliminating the need for
learned reward or value models. Let $\pi_\theta(y \mid x)$ denote a policy that generates a
reasoning trajectory $y=(y_1,\dots,y_T)$ conditioned on a prompt $x$.
For each prompt, GRPO samples a group of $G$ trajectories
$y_i \sim \pi_{\text{old}}(\cdot \mid x), \  i = 1,\ldots,G,$
where $\pi_{\text{old}}$ denotes the behavior policy used to collect the sampled responses, and assigns each trajectory a verifiable outcome reward
$r_i = R_{\text{out}}(x,y_i), \  r_i \in \{0,1\},$
based solely on final-answer correctness.

Rather than using absolute rewards, GRPO computes a
group-relative advantage
$A_i = \frac{r_i - \bar{r}}{\sigma(r)},
\quad
\bar{r} = \frac{1}{G}\sum_{j=1}^G r_j,$
which centers rewards within the group and enforces relative
preferences among sampled trajectories. In LLM implementations, the policy ratio and clipping are applied token by token. For token $t$ in trajectory $i$, define
\[
\rho_{i,t}(\theta)=\frac{\pi_\theta(y_{i,t}\mid x,y_{i,<t})}{\pi_{\text{old}}(y_{i,t}\mid x,y_{i,<t})}.
\]
The GRPO objective for a single prompt $x$ is then
\begin{equation}
\begin{aligned}
\mathcal{L}_{\text{GRPO}}(\theta)
=
\mathbb{E}\Bigg[
&\frac{1}{G}
\sum_{i=1}^G \sum_{t=1}^{|y_i|}
\min\Big(
\rho_{i,t}(\theta) A_i,
\text{clip}\!\big(
\rho_{i,t}(\theta), 1-\epsilon, 1+\epsilon
\big) A_i
\Big)
\\
&\quad
- \beta\,
\frac{1}{G}\sum_{i=1}^G \sum_{t=1}^{|y_i|}
D_{\text{KL}}\!\left(
\pi_\theta(\cdot\mid x,y_{i,<t})
\,\|\, \pi_{\text{ref}}(\cdot\mid x,y_{i,<t})
\right)
\Bigg],
\end{aligned}
\label{eq:grpo}
\end{equation}
where $\pi_{\text{ref}}$ is a fixed reference policy used only for KL regularization.

Thus, GRPO uses outcome-level rewards to define a trajectory-level advantage, uses the behavior policy $\pi_{\text{old}}$ in the clipped importance ratio, and uses the fixed reference policy $\pi_{\text{ref}}$ in the KL penalty. By relying exclusively on outcome-level supervision,
GRPO assigns the same advantage to all steps within a
trajectory and provides no explicit mechanism for step-level
credit assignment. Moreover, when all sampled trajectories
receive identical rewards, the group-relative advantages
collapse to zero, yielding no learning signal. These
limitations motivate the use of dense process supervision,
which we analyze in Section~\ref{sec:prm_problems}.

\subsection{Outcome Rewards and Process Rewards}

Reasoning-focused reinforcement learning uses two common kinds of feedback that differ in granularity.

\textbf{Outcome rewards.}
Outcome rewards assign a single scalar to the full trajectory,
$R_{\text{out}}(x,y)$.
In RLVR, this reward typically depends only on whether the final answer is correct. Outcome rewards are attractive because they are reliable and difficult to hack when the verifier is trustworthy, but they are sparse and provide no direct information about which intermediate reasoning steps were useful.

\textbf{Process rewards.}
Process rewards instead score intermediate reasoning steps. A process reward model (PRM) provides feedback of the form
\[
    r_{\text{proc}}(x, y_{<t}, y_t),
\]
which evaluates the local quality of the next step given the prompt and the partial reasoning trace. In principle, process rewards offer denser learning signals and finer credit assignment than outcome rewards. However, PRMs are learned models rather than ground-truth verifiers, so they can be biased, miscalibrated, and susceptible to reward hacking.

The central challenge of this paper is therefore not whether process supervision is useful, but how to incorporate it into GRPO without discarding the reliability of verifier-based outcome rewards.

\section{Related Works}\label{appx:related_works}

\subsection{Outcome Supervision for Reasoning Models}
Outcome-based supervision has been widely adopted for training reasoning models, particularly through reinforcement learning with verifier rewards (RLVR), where a deterministic verifier assigns rewards based on final-answer correctness \citep{cobbe2021training, guo2025deepseek}. Group Relative Policy Optimization (GRPO) \citep{shao2024deepseekmath} builds on this paradigm by using group-relative normalization to eliminate the need for learned reward or value models, yielding strong robustness to reward hacking and favorable scalability.

However, outcome-only supervision is inherently sparse. GRPO assigns a single scalar reward to an entire reasoning trajectory and applies the same learning signal uniformly across all steps. As a result, GRPO provides no explicit mechanism for step-level credit assignment and can fail to make progress on difficult problems where most sampled trajectories receive identical rewards \citep{sun2025rl}. These limitations motivate the incorporation of denser supervision signals.

\vspace{-2mm}
\subsection{Process Supervision and Process Reward Models}
Process supervision addresses reward sparsity by providing feedback on intermediate reasoning steps rather than only evaluating final answers \citep{uesato2022solving, lightman2023let}. This supervision is typically implemented using Process Reward Models (PRMs), which score individual reasoning steps. Both discriminative PRMs \citep{lightman2023let, wang2024math, zhang2025lessons} and generative PRMs \citep{zheng2023judging, khalifa2025process, zhao2025genprm} have been shown to improve credit assignment and learning efficiency.

Despite their promise, PRMs are learned models trained on imperfect supervision and are therefore vulnerable to bias and distributional mismatch. Policies optimized directly using PRM outputs can exploit systematic artifacts in the reward model, leading to reward hacking behaviors that do not correspond to genuine improvements in reasoning quality \citep{anonymous2025cut, baker2025monitoring}. This makes naive reliance on PRMs as primary reward signals problematic.

\section{Invariance Proof}\label{appx:proof}

\begin{proposition}[Invariance to prompt-wise positive affine PRM transformations]
\label{app_prop:invariance_inflation}
Fix a prompt $x$ and let $\{c_{i,j}\}$ denote the future-cumulated rewards assigned to every token in step $j$ of trajectories $y_i \sim \pi_\theta(\cdot\mid x)$, with $j=1,\dots,S_i$.
\ours defines token-level advantages
\[
A_{i,j}=\frac{c_{i,j}-\bar c}{\sigma(c)},\qquad
\bar c=\frac{\sum_{i=1}^{G}\sum_{j=1}^{S_i} c_{i,j}}{\sum_{i=1}^{G} S_i},
\]
with $\sigma(c)$ the standard deviation over all future-cumulated rewards in the sampled group for $x$.
Suppose a policy update induces a \emph{prompt-wise positive affine transformation} of these rewards, i.e.,
\[
c'_{i,j} = a(x)c_{i,j} + b(x)\quad \text{for all } i,j,
\]
for some $a(x)>0$ and $b(x)\in\mathbb{R}$ independent of $i,j$.
Then the \ours advantages are invariant:
\[
A'_{i,j} = A_{i,j}\quad \text{for all } i,j.
\]
Consequently, uniform additive shifts and positive rescalings of these PRM-derived rewards do not create a learning signal in \ours. By contrast, outcome-level objectives built from absolute PRM scores are generally not invariant to the same transformations.
\end{proposition}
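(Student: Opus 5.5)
The plan is a direct computation: compute how the pooled mean $\bar c$ and the pooled standard deviation $\sigma(c)$ transform under the map $c\mapsto a(x)c+b(x)$, then substitute into the definition of $A'_{i,j}$. Write $N=\sum_{i=1}^G S_i$ for the number of pooled step-level entries for prompt $x$. Since $a(x),b(x)$ do not depend on $i,j$, linearity of the pooled average gives
\[
\bar c' = \frac{1}{N}\sum_{i=1}^{G}\sum_{j=1}^{S_i}\bigl(a(x)c_{i,j}+b(x)\bigr) = a(x)\,\bar c + b(x).
\]
Consequently $c'_{i,j}-\bar c' = a(x)\,(c_{i,j}-\bar c)$: the shift $b(x)$ cancels.

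Next, for the standard deviation, I will use the same pooled normalization as $\bar c$. The argument is unchanged whether one uses $1/N$ or $1/(N-1)$. One has
\[
\sigma(c')=\Bigl(\tfrac{1}{N}\textstyle\sum_{i,j}\bigl(c'_{i,j}-\bar c'\bigr)^2\Bigr)^{1/2}
=\Bigl(a(x)^2\,\tfrac{1}{N}\textstyle\sum_{i,j}(c_{i,j}-\bar c)^2\Bigr)^{1/2}=|a(x)|\,\sigma(c)=a(x)\,\sigma(c).
\]
The last equality is where positivity of $a(x)$ is used. Dividing the two identities gives $A'_{i,j}=a(x)(c_{i,j}-\bar c)/(a(x)\sigma(c))=A_{i,j}$ for every $i,j$. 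Since every token in step $j$ inherits $A_{i,j}$, the token-level advantages are unchanged as well. Hence the \ours surrogate, whose only dependence on the PRM is through these advantages, yields an identical update. This proves the "consequently" clause.

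The only genuine obstacle is the degenerate case $\sigma(c)=0$, where all pooled $c_{i,j}$ coincide and $A_{i,j}$ is undefined. I will handle it explicitly. By the computation above, $\sigma(c')=a(x)\sigma(c)=0$ exactly when $\sigma(c)=0$, so degeneracy is itself invariant. Under the standard implementation convention, either a small $\varepsilon$ is added to the denominator or the advantages are set to $0$. With advantages set to $0$, both sides are $0$. With an $\varepsilon$ in the denominator, the identity holds only approximately, so I will state the result for the exact formula and remark on this caveat. I will also note that if the centered value is computed as $c_{i,j}-\bar c$, positivity of $a(x)$ matters: a negative $a$ would flip the sign of every advantage.

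For the contrast with outcome-level objectives, a single counterexample suffices. Take PRM-as-ORM with trajectory reward $R(x,y)=\prod_j r_\phi(\cdot)$ or any absolute score optimized directly, say $J=\mathbb{E}_y[R(x,y)]$. The transformation $R\mapsto aR+b$ changes the objective value, and with $b>0$ it rewards any behavior that induces the shift. So the absolute objective is not invariant, and a policy change that merely induces such a shift strictly increases $J$. This shows the "generally not invariant" claim without further machinery.
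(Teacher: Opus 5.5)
Your proof is correct and follows the paper's argument: you show $\bar c' = a(x)\bar c + b(x)$ and $\sigma(c') = a(x)\sigma(c)$ using $a(x)>0$, then substitute to get $A'_{i,j}=A_{i,j}$, and you treat the contrast with absolute PRM objectives at the same informal level as the paper. Your explicit handling of the degenerate case $\sigma(c)=0$ is a small addition that the paper's sketch omits.
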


\begin{proof}[Proof sketch]
Under the assumed transformation, the baseline undergoes the same affine map:
\[
\bar c' = \frac{\sum_{i=1}^{G}\sum_{j=1}^{S_i} (a c_{i,j}+b)}{\sum_{i=1}^{G} S_i} = a\bar c + b.
\]
Because $a>0$, standard deviation scales linearly under this transformation, so $\sigma(c')=a\sigma(c)$.
Therefore,
\[
A'_{i,j}=\frac{c'_{i,j}-\bar c'}{\sigma(c')}
=\frac{(a c_{i,j}+b)-(a\bar c+b)}{a\sigma(c)}=\frac{c_{i,j}-\bar c}{\sigma(c)}=A_{i,j}.
\]
Thus, any update that only shifts or positively rescales PRM-derived rewards uniformly within a prompt produces exactly the same normalized advantages and therefore cannot amplify itself through training. In contrast, outcome-level objectives optimize an absolute scalar built from the raw PRM scores, such as a sum, product, or minimum, and these quantities generally change under the same affine transformation. This leaves such objectives more exposed to over-optimization of PRM-correlated but non-causal features.
\end{proof}

\section{Additional Results}

\subsection{Results on TreeRPO}\label{appx:TreeRPO}

Table~\ref{tab:treerpo_results} reports the direct comparison with TreeRPO under the same setup. \ours significantly improves over TreeRPO because it is not limited to intrinsic process signals tied to the policy model and just RLVR rewards.


\begin{table}[htbp]
    \centering
    \caption{Overall Pass@1 (Avg@16) performance of the Qwen2.5-Math series with Qwen2.5-Math-1.5B as the base model.}
    \label{tab:treerpo_results}
    \begin{tabular}{@{}lcccc@{}}
        \toprule
        Method & AIME24 & MATH500 & Minerva & Macro Accuracy \\
        \midrule
        TreeRPO & 16.8 & 70.7 & 24.0 & 37.2 \\
        \ours & 23.3 & 70.0 & 26.0 & 39.8 \\
        \bottomrule
    \end{tabular}
\end{table}


\subsection{Reward-Hacking Behavior Analysis}\label{appx:reward_hacking_behavior}

We further analyze the behavior of reward-hacked responses under \textsc{PRM-as-ORM}. We define \emph{hacking traces} as responses that are incorrect according to the \textit{math\_verify} verifier but still receive high PRM reward, and \emph{non-hacking traces} as incorrect responses that receive low PRM reward. This comparison helps characterize which response patterns the PRM tends to overvalue.

Figure~\ref{fig:reward_hacking_word_cloud} shows a word cloud of the top 80 words that are most likely to appear in hacking traces relative to non-hacking traces. The dominant signal is not better mathematical content, but recurring stylistic and structural words that appear disproportionately often in high-PRM incorrect responses.

Figure~\ref{fig:reward_hacking_length} compares hacking and non-hacking traces by total number of words and total number of reasoning steps. In general, hacking traces are longer on both axes, suggesting that the PRM can overvalue verbose, highly structured responses even when they do not produce correct final answers.

\begin{figure}[htbp]
    \centering
    \includegraphics[width=0.72\linewidth]{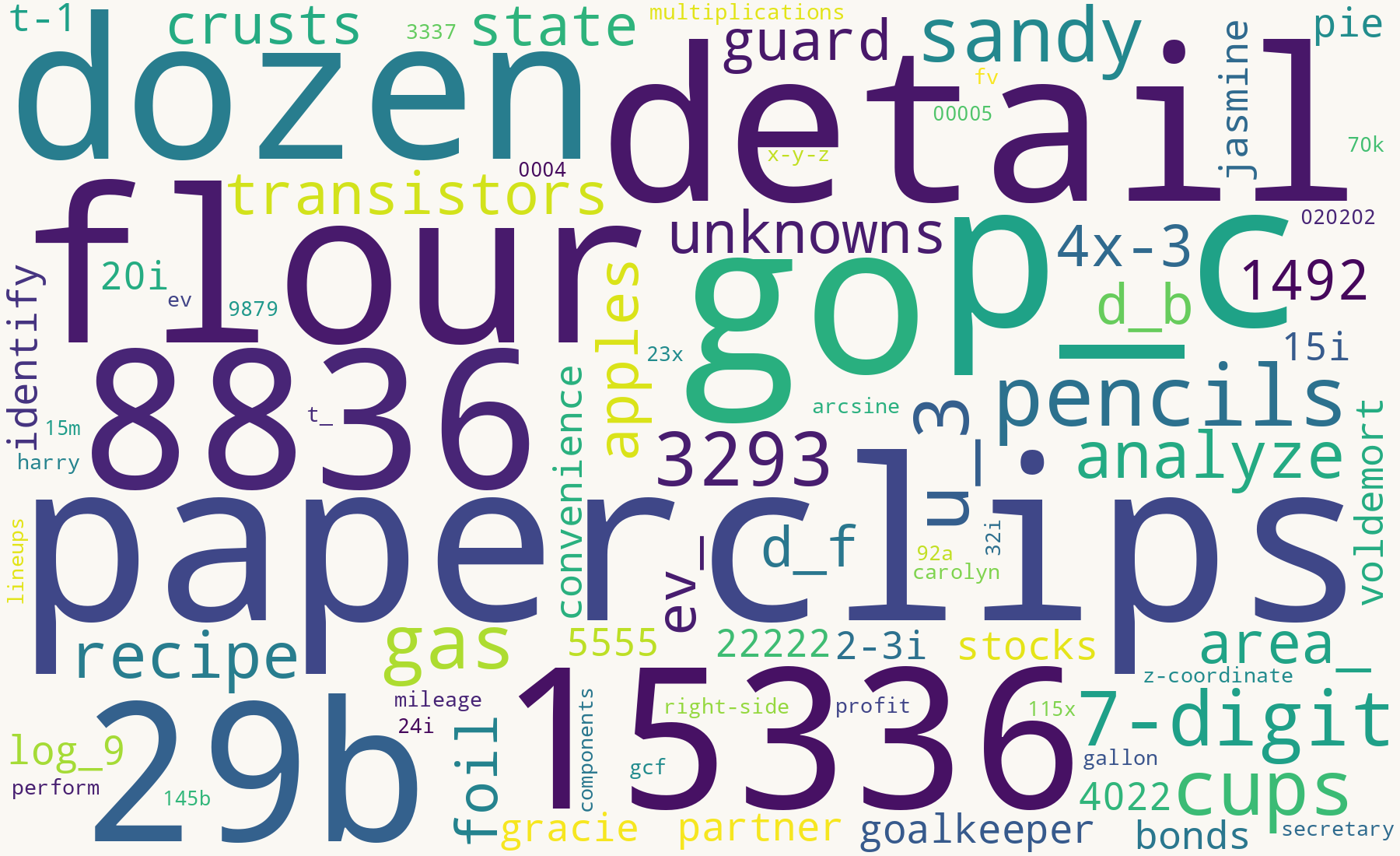}
    \caption{\textbf{Word cloud of the top 80 words most associated with reward-hacked traces.} Hacking traces are incorrect responses that receive high PRM reward; non-hacking traces are incorrect responses that receive low PRM reward.}
    \label{fig:reward_hacking_word_cloud}
\end{figure}

\begin{figure}[htbp]
    \centering
    \includegraphics[width=0.82\linewidth]{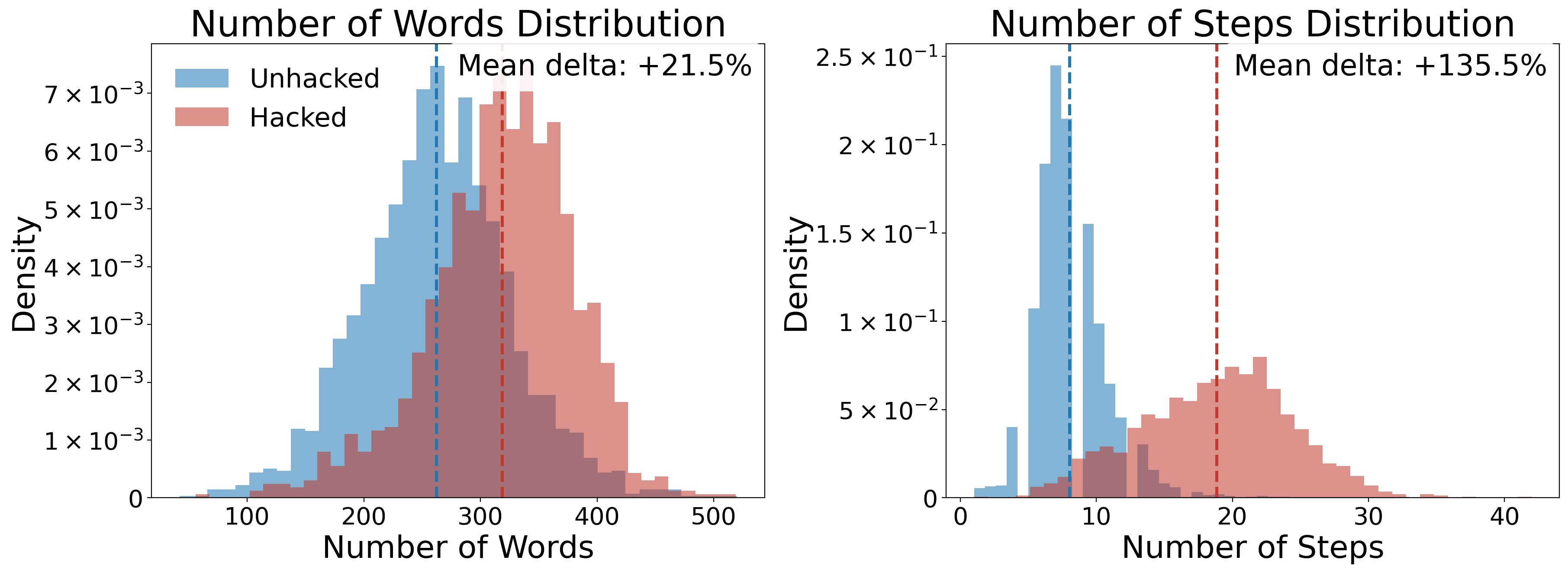}
    \caption{\textbf{Distribution of trace lengths for hacked and unhacked responses.} Reward-hacked traces are generally longer, both in number of words and in number of reasoning steps.}
    \label{fig:reward_hacking_length}
\end{figure}



\subsection{Training Only on Hard-Subset of Data}\label{appx:hard_subset}

\textbf{Hard Prompt Selection.}
To study zero-gradient failures, we construct hard subsets of GSM8K and MATH by first training Qwen2.5-1.5B-Instruct using standard GRPO for one epoch and then selecting prompts for which all sampled trajectories receive zero verifier reward. This results in 280 hard GSM8K prompts and 2,969 hard MATH prompts for the model. 

Table~\ref{tab:main_results} reports Pass@1 accuracy across all benchmarks. \ours consistently outperforms \textsc{Vanilla-GRPO} on both in-domain and out-of-distribution tasks, with particularly strong gains on hard subsets where outcome-only supervision struggles.
Compared to \textsc{PRM-as-ORM}, \ours achieves comparable or better performance while avoiding the degradation observed on certain benchmarks. These results indicate that step-level credit assignment improves learning efficiency, while preserving the robustness of verifier-based supervision.
Notably, \ours trained on MATH generalizes better to AMC, MinervaMath, and OlympiadBench than both baselines, suggesting improved reasoning transfer beyond the training distribution.

\begin{table*}[ht!]
\centering
\caption{\textbf{Pass@1 accuracy on six mathematical reasoning benchmarks.} \ours consistently outperforms \textsc{Vanilla-GRPO} and matches or surpasses \textsc{PRM-as-ORM} on most benchmarks.}
\label{tab:main_results}
\scriptsize
\setlength{\tabcolsep}{4pt}
\resizebox{\textwidth}{!}{%
\begin{tabular}{lcccccc}
\toprule
\multicolumn{7}{c}{\textbf{Policy: Qwen2.5-1.5B-Instruct | Training Data: MATH Hard Subset}} \\
\midrule
Method & AIME-24 & AMC & GSM8K & MATH500 & Minerva & Olympiad \\
\hline
\textsc{Vanilla-GRPO} 
& 0.00 
& 7.23 
& 49.43 
& 20.00 
& 5.51 
& 7.26 \\

\textsc{PRM-as-ORM} 
& 0.00 
& 18.07 (+10.84 \%) 
& 64.52 (+15.09 \%) 
& \textbf{36.40 (+16.40 \%)} 
& 5.51 (+0.00 \%) 
& \textbf{15.70 (+8.44 \%)} \\

\ours 
& 0.00 
& \textbf{22.89 (+15.66 \%)}
& \textbf{68.01 (+18.58 \%)} 
& 35.00 (+15.00 \%) 
& \textbf{8.09 (+2.58 \%)} 
& 14.37 (+7.11 \%) \\
\midrule
\multicolumn{7}{c}{\textbf{Policy: Qwen2.5-1.5B-Instruct | Training Data: GSM8K Hard Subset}} \\
\hline
\textsc{Vanilla-GRPO} 
& 0.00 
& \textbf{15.66} 
& 61.49 
& 34.00 
& 6.62 
& 13.19 \\

\textsc{PRM-as-ORM} 
& 0.00
& 13.25 (-2.41 \%) 
& \textbf{64.52 (+3.03 \%)} 
& 34.20 (+0.20 \%) 
& 7.35 (+0.73 \%) 
& 13.63 (+0.44 \%) \\

\ours 
& \textbf{3.30 (+3.30 \%)} 
& 15.66 (+0.00 \%) 
& 64.37 (+2.88 \%) 
& \textbf{36.00 (+2.00 \%)} 
& \textbf{8.82 (+2.20 \%)} 
& \textbf{15.41 (+2.22 \%)} \\
\bottomrule
\end{tabular}}
\end{table*}

\subsection{Training using weighted combination of Verifier and Process Rewards}\label{appx:weighted_combination}

As a separate ablation from the verifier-gated design of \ours, we consider an always-on reward-mixing variant that combines verifier-based outcome rewards and step-level process supervision on every prompt, without gating. Concretely, we introduce a scalar weight $\lambda \in [0,1]$ and augment the step-level process reward by adding $\lambda R_{\text{out}}(x,y)$ at every step, where $R_{\text{out}}(x,y) \in \{0,1\}$ denotes the verifier reward for the full trajectory. Larger values of $\lambda$ therefore increase the influence of outcome correctness on every step, while $\lambda=0$ corresponds to pure process-based step supervision.

We examine whether this explicit always-on mixing of verifier and process rewards improves performance. We vary the weight $\lambda$ and evaluate the resulting model across benchmarks. As shown in Figure~\ref{fig:lambda_ablation}, increasing $\lambda$ does not yield consistent improvements and, in some cases, degrades performance, particularly on GSM8K and AMC. This suggests that naively injecting the same outcome reward into every step is not as effective as the verifier-gated design of \ours. Overall, this ablation supports our main design choice: it is better to keep verifier supervision and process supervision separated by the gate than to combine them everywhere through a fixed mixing weight.

\begin{figure}[htbp]
    \centering
    \includegraphics[width=0.6\linewidth]{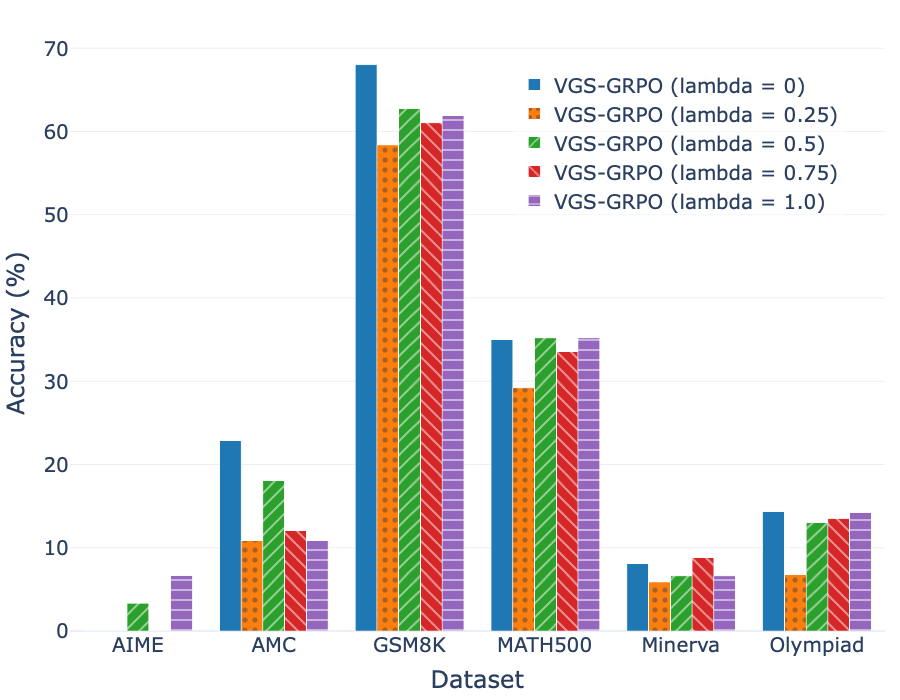}
    \caption{\textbf{Accuracy across datasets for different outcome-reward weights $\lambda$ in an always-on reward-mixing ablation.} This grouped bar chart shows the accuracy obtained when a verifier reward term $\lambda R_{\text{out}}(x,y)$ is added to the step-level process reward at every step, without gating, for $\lambda \in \{0, 0.25, 0.5, 0.75, 1.0\}$. Accuracy values are reported as percentages.}
    \label{fig:lambda_ablation}
    \vspace{-2mm}
\end{figure}







